\documentclass[letterpaper,10pt,conference]{ieeeconf}

\IEEEoverridecommandlockouts
\usepackage{amsmath,amssymb,amsfonts}
\usepackage{bm}
\usepackage{graphicx}
\usepackage{booktabs}
\usepackage{cite}
\usepackage{algorithm}
\usepackage{algpseudocode}
\usepackage{xcolor}
\usepackage[normalem]{ulem}

\newtheorem{assumption}{Assumption}
\newtheorem{theorem}{Theorem}
\newtheorem{remark}{Remark}
\newtheorem{definition}{Definition}

\newtheorem{proposition}{Proposition}

\title{\LARGE \bf
Learning Dynamic Games for Cooperative Iterative Tasks
}

\author{Mark Pustilnik$^{1}$, Francesco Borrelli$^{1}$% <-this % stops a space
\thanks{$^{1}$University of California at Berkeley, USA \quad {\tt\small \{pkmark, fborrelli\}@berkeley.edu}}%
}

\begin{document}

\maketitle
\thispagestyle{empty}
\pagestyle{empty}
%===============================================================================
\begin{abstract}

We consider cooperative and iterative tasks involving multiple agents whose interactions are modeled as a dynamic game with shared constraints. We propose a general learning framework applicable to nonlinear and nonconvex dynamic games that uses previously executed feasible trajectories to construct a sampled joint safe set and player-specific cost-to-go functions. At each time step, the agents jointly select a previously sampled terminal state and a Generalized Nash equilibrium used to reach it. A new solution is adopted only if it satisfies a cooperative acceptance rule agreed upon by the players. The framework accommodates individual non-degradation requirements and selection criteria such as weighted costs and Nash bargaining. We show that the proposed framework guarantees recursive feasibility and characterize conditions under which each player's cost is non-increasing across iterations.

\end{abstract}

%===============================================================================
\section{Introduction}
Many robotic systems repeatedly perform cooperative tasks from identical initial conditions, including payload transport and coordinated assembly using multiple manipulators. Previous executions provide trajectory data that can improve subsequent performance. Although
the agents cooperate, each may retain its own dynamics, constraints, and objective, while shared constraints couple their decisions. Dynamic games model these interactions. Learning Model Predictive Control (LMPC) exploits repeated single agent tasks by constructing terminal safe sets and cost-to-go functions from successful trajectories, enabling improvement while preserving recursive feasibility \cite{RosoliaBorrelli2018}. Extending this principle to dynamic games requires addressing multiple equilibria with different player trade-offs: improving one player's cost may worsen another's. Learning must therefore determine both how to improve previously executed trajectories and how the players agree on which new solution to adopt. In this work, we consider cooperative iterative tasks in which the players agree in advance on rules for selecting new solutions. 
Cooperation does not require a common objective: each player retains its individual cost function, while candidate solutions are evaluated according to a jointly agreed criterion. Individual non-degradation can determine admissibility, while weighted costs or Nash bargaining (NB) select among admissible candidates. We propose a general learning framework for dynamic games where each agent uses previously executed feasible joint trajectories to construct a sampled joint safe set. At each time step, candidate solutions connect the current joint state to a previously visited joint state, after which the remainder of a trajectory from a previous successful iteration can be reused. Since the terminal state is selected from a finite set of previously sampled states, the learning problem naturally includes a discrete decision layer in addition to the continuous dynamic-game equilibrium computation. 

The framework accommodates nonlinear dynamics and nonconvex costs and constraints, starting from an initial successful feasible trajectory. The resulting performance depends on the initial trajectory, sampled states, and candidates returned by the game solver. The contributions of this paper are:
\begin{enumerate}
\item A learning framework for cooperative iterative dynamic games with shared constraints, jointly selecting terminal occurrences and equilibrium candidates using a pre-agreed cooperative rule.
\item Recursive-feasibility and individual non-degradation guarantees under stated conditions, supported by an explicitly retained feasible continuation.
\item A nonlinear, nonconvex three-player demonstration comparing equilibrium exploration and cooperative selection under shared collision-avoidance constraints.
\end{enumerate}

%===============================================================================
\section{Related Work}
Learning from repeated tasks has been studied through Iterative Learning Control (ILC) \cite{Bristow2006,Wang2009}, including robust formulations that account for non-repetitive disturbances \cite{LinSunTomizuka2015}, as well as through Batch-MPC \cite{LeeChinLeeLee1999,LeeLee2000}. Extensions of LMPC have considered simultaneous improvement of multiple objectives \cite{NairVallonBorrelli2025}, decentralized multi-agent control under shared constraints \cite{Zhu2020}, and non-cooperative distributed MPC with iterative learning \cite{Hu2020}. In games with coupled feasible sets, generalized Nash equilibria (GNEs) provide the corresponding solution concept. Normalized equilibria impose a common structure on the shared-constraint multipliers across players \cite{Rosen1965,FacchineiKanzow2010}, whereas non-normalized equilibria enlarge the set of candidate solutions by allowing more general multiplier allocations \cite{PustilnikBorrelli2025}. Learned terminal information has also been incorporated into generalized Nash games \cite{Kim2025}, while cooperative criteria such as Nash bargaining can be used to compare candidate outcomes \cite{Nash1950}.

Iterative learning exploits repeated executions to improve performance, while equilibrium selection addresses trade-offs among players. This work combines these perspectives through cooperative selection of terminal occurrences and equilibrium candidates using stored joint trajectories. Feasible continuations and player-wise acceptance support recursive feasibility and individual non-degradation, even when candidate generation is incomplete. The retained backup need not itself be an equilibrium; equilibrium selection determines which candidate trade-offs are explored.

%===============================================================================
\section{Problem Formulation} \label{sec:problem_formulation}

We consider a cooperative task that is executed repeatedly. Each execution, referred to as an \emph{iteration}, starts from the same initial joint state and terminates when the players reach a common target joint state. The iterations are indexed by $j$, and time within an iteration is indexed by $k$. The task is modeled as a discrete-time dynamic game with players $i \in \mathcal{M} := \{1, \ldots, M\}$, deterministic and exactly known dynamics, and game data that remain unchanged across iterations. At time $k$ of iteration $j$, the joint state and input are
%Consider a discrete-time dynamic game with players $i\in\mathcal M:=\{1,\ldots,M\}$, deterministic exactly known dynamics, and game data unchanged across iterations. Let $j$ index iterations and $k$ time, with joint state and input
\[
x_k^j:=\operatorname{col}(x_{1,k}^j,\ldots,x_{M,k}^j),
\qquad
u_k^j:=\operatorname{col}(u_{1,k}^j,\ldots,u_{M,k}^j).
\]
Every iteration starts from $x_0^j=x_S$ and seeks to reach $x_F:=\operatorname{col} (x_{1,F},\ldots,x_{M,F})$. For an iteration completed in finite time, define 
\begin{equation}
    T^j:=\min\{k\ge0:x_k^j=x_F\}.
\end{equation}

\begin{assumption}[Nonnegative Stage Costs] \label{ass1}
For every player $i$, the stage cost satisfies 
\begin{equation}
    \ell_i(x,u)\ge0,\qquad \ell_i(x_F,u_F)=0
    \label{eq:nonnegative_stage_cost}
\end{equation}
for all feasible $(x,u)$, where $u_F$ is a feasible terminal input that
maintains the system at $x_F$.
\end{assumption}

At time $k$ of iteration $j$, given a prediction horizon $N$, player $i$ solves
\begin{subequations}\label{eq:dynamic_game}
\begin{align}
\min_{\mathbf{x}_{i,k}^j,\mathbf{u}_{i,k}^j} \quad
     &J_{i,k}^j :=
    \sum_{\tau=k}^{k+N-1}
    \ell_i(x_{\tau|k}^j,u_{\tau|k}^j)
    + V_i(x_{k+N|k}^j)
    \label{eq:dynamic_game_cost} \\[1mm]
\text{s.t.} \quad
    & x_{i,k|k}^{j}=x_{i,k}^{j}
    \label{eq:dynamic_game_initial}\\
    & x_{i,\tau+1|k}^j = f_i(x_{i,\tau|k}^j,u_{i,\tau|k}^j),~ \tau=k,...,k+N-1
    \label{eq:dynamic_game_dynamics}\\
    & g_i(x_{i,\tau|k}^j,u_{i,\tau|k}^j)\leq 0,  ~ \tau=k,...,k+N-1
    \label{eq:dynamic_game_private}\\
    & s(x_{\tau|k}^j,u_{\tau|k}^j)\leq 0, ~ \tau=k,...,k+N-1
    \label{eq:dynamic_game_shared}
\end{align}
\end{subequations}

Here, $\ell_i$ and $V_i$ are player $i$'s stage and terminal costs, which may depend on the joint state and input (no objective separability is assumed). The functions $f_i$, $g_i$, and $s$ describe individual dynamics, private constraints, and shared constraints, respectively. Additive cost terms independent of a player's own strategy do not affect its best response, but remain part of its realized cost and stored cost-to-go and can therefore influence cooperative candidate selection.

\subsection{Generalized Nash Equilibrium}

Let $\mathbf z_{i,k}^j:= (\mathbf x_{i,k}^j,\mathbf u_{i,k}^j)$ denote player $i$'s predicted strategy, and let $\mathcal Z_{i,k}^j(\mathbf z_{-i,k}^j)$ be its feasible set defined by \eqref{eq:dynamic_game_initial}-\eqref{eq:dynamic_game_shared} where the subscript $-i$ denotes all players except $i$. A joint strategy $\mathbf z_k^{j,\star}$ is a generalized Nash equilibrium (GNE) if
\begin{equation}
\mathbf z_{i,k}^{j,\star}
\in
\underset{
\mathbf z_{i,k}^j\in
\mathcal Z_{i,k}^j(\mathbf z_{-i,k}^{j,\star})
}{\arg\min}
J_{i,k}^j
\bigl(\mathbf z_{i,k}^j,\mathbf z_{-i,k}^{j,\star}\bigr),
\quad \forall i\in\mathcal M.
\label{eq:GNE}
\end{equation}
Thus, no player can lower its cost through a feasible unilateral action. For differentiable games, equilibrium candidates can be computed using the players' combined KKT conditions \cite{Rosen1965,FacchineiKanzow2010}. For compactness, let $h_i(\mathbf z_i)=0$ collect player $i$'s dynamics and other equality constraints over the prediction horizon, let
$g_i(\mathbf z_i)\leq0$ collect its private inequality constraints, and let $s(\mathbf z)\leq0$ collect the shared constraints. The Lagrangian of player $i$ is then
\begin{equation}
\begin{aligned}
    \mathcal{L}_i
    =\;&
    J_{i,k}^{j}
    + \mu_i^\top h_i
    + \lambda_i^\top g_i
    + \sigma_i^\top s ,
\end{aligned}
\label{eq:player_lagrangian}
\end{equation}
where $\mu_i$, $\lambda_i$, and $\sigma_i$ are the dual variables associated with the equality, private inequality, and shared inequality constraints, respectively. Under a suitable constraint qualification (see \cite{FacchineiKanzow2010} for details), every GNE satisfies the combined KKT conditions: 
\begin{subequations}\label{eq:GNE_KKT}
\begin{align}
    & \nabla_{\mathbf{z}_i}\mathcal{L}_i = 0,
    && \forall i\in\mathcal{M},\\
    & h_i(\mathbf{z}_i)=0,
    && \forall i\in\mathcal{M},\\
    & 0\leq \lambda_i \perp -g_i(\mathbf{z}_i)\geq0,
    && \forall i\in\mathcal{M},\\
    & 0\leq \sigma_i \perp
    -s(\mathbf{z}_1,\ldots,\mathbf{z}_M)\geq0,
    && \forall i\in\mathcal{M}.
\end{align}
\end{subequations}
These conditions are sufficient when each player's problem is convex in its own strategy, otherwise they identify stationary candidates that need not be GNEs. The dynamic game may admit multiple GNEs. The learning framework developed in this work does not require the equilibrium to be unique. 

\subsection{Cooperative Iterative Learning Objective}
For an iteration completed at time $T^j$, player $i$'s realized cost is \begin{equation}
    \mathcal J_i^j
    :=
    \sum_{k=0}^{T^j-1}\ell_i(x_k^j,u_k^j)
    +V_i(x_{T^j}^j).
    \label{eq:realized_iteration_cost}
\end{equation}
Starting from an initial successful feasible trajectory from $x_S$ to $x_F$, the objective is to use previous executions to improve performance while ensuring: 
\begin{enumerate}
    \item \emph{Recursive feasibility:} a feasible continuation to $x_F$ remains available at every time step.

    \item \emph{Cooperative acceptance:} new strategies satisfy the players' pre-agreed admissibility rule.

    \item \emph{Individual non-degradation:} under the corresponding acceptance rule, successive completed iterations satisfy
    \begin{equation}
        \mathcal J_i^j\le\mathcal J_i^{j-1},
        \qquad \forall i\in\mathcal M.
        \label{eq:iteration_nondegradation}
    \end{equation}
\end{enumerate}
In the learning formulation below, each GNE is an open-loop equilibrium of the finite-horizon game conditional on the selected terminal joint state. This does not imply that the concatenated full-task trajectory or the resulting receding-horizon policy is a Nash equilibrium.

%===============================================================================
\section{Learning Dynamic Game Method}
At each time step, the players jointly select a terminal joint state from the sampled safe set and an equilibrium candidate that reaches it. These two choices are coupled: each terminal state defines a distinct terminal-constrained game, and hence its own set of equilibria, which may be infinite. 
Each equilibrium candidate is an $N$-step predicted joint trajectory ending at the selected terminal state. Appending to it the stored trajectory from that terminal state to $x_F$ yields a complete joint trajectory from the current state to the target, which is evaluated under the agreed admissibility and selection rules. If no candidate is accepted, the players continue along the retained feasible backup.
%Each candidate is concatenated with the stored continuation and evaluated using the agreed admissibility and selection rules. If no candidate is accepted, the previously retained feasible backup is used.

\subsection{Sampled Joint Safe Set and Cost-to-Go}
Terminal safe sets provide feasible continuations beyond the prediction horizon, while terminal cost-to-go information accounts for the remaining task cost. Computing these quantities directly can be difficult for nonlinear, constrained systems. Following LMPC \cite{RosoliaBorrelli2018}, we construct a sampled joint safe set and player-specific costs-to-go from previously successful trajectories. An iteration is successful if it reaches $x_F$ in finite time while satisfying all
private and shared constraints. Let $\mathcal I^j$ denote the successful iterations available up to iteration $j$, with $T^l$ the completion time of iteration $l$. The sampled joint safe set is
\begin{equation}
    \mathcal{SS}^{j} := \bigcup_{l\in\mathcal{I}^{j}} \bigcup_{t=0}^{T^{l}} \left\{x_t^{l}\right\}
    \label{eq:sampled_safe_set}
\end{equation}
Every state in $\mathcal{SS}^j$ has a known feasible continuation to $x_F$. For each sampled occurrence $x_t^l$, player $i$'s realized cost-to-go is 
\begin{equation}
    Q_{i,t}^{l} := \sum_{\tau=t}^{T^{l}-1} \ell_i(x_\tau^{l},u_\tau^{l}) +     V_i(x_{T^{l}}^{l})
    \label{eq:sampled_cost_to_go}
\end{equation}

To associate each occurrence with its continuation and cost vector, define the trajectory database
\begin{equation}
    \mathcal D^j := \left\{ (l,t)\;\middle|\; l\in\mathcal I^j,\quad t=0,\ldots,T^l \right\}.
    \label{eq:sampled_database}
\end{equation}

Each entry $(l,t)\in\mathcal D^j$ identifies the joint state $x_t^l$, the costs $Q_{i,t}^l$, and the stored state-input suffix from that state to $x_F$. Repeated visits to the same joint state remain distinct entries because their continuations and cost vectors may differ. Reconnecting a feasible predicted trajectory to an entry therefore provides a complete feasible continuation, without requiring a new equilibrium to be found at every subsequent step. 

\begin{remark}[Retained Feasible Backup]
An executed trajectory may combine segments generated using different terminal states and GNE parameterizations. Its continuation therefore need not be a GNE for any single parameterization and may not be recovered by resolving a dynamic game. The method explicitly retains the feasible state-input continuation as a backup.
\end{remark}
\subsection{Mixed-Integer Dynamic Game Formulation}

At time $k$ of iteration $j$, the players select a terminal occurrence from $\mathcal D^{j-1}=\{q_1,\ldots,q_{n_s}\}$, where $q_r=(j_r,k_r)$ identifies a stored joint state, its continuation, and player-specific costs-to-go. The common binary vector $\boldsymbol\delta=(\delta_1,\ldots,\delta_{n_s})$ satisfies 

\begin{equation}
    \sum_{r=1}^{n_s}\delta_r=1,
    \qquad \delta_r\in\{0,1\},
    \label{eq:terminal_binary_sum}
\end{equation}
selecting exactly one occurrence to be reached after
$N$ prediction steps. For fixed $\boldsymbol\delta$,
player $i$ solves

\begin{subequations}\label{eq:mixed_integer_game}
\begin{align}
\min_{\mathbf{x}_{i,k}^j,\mathbf{u}_{i,k}^j}
&~
J_{i,k}^j(\boldsymbol{\delta}) = \sum_{\tau=k}^{k+N-1} \ell_i \left(x_{\tau|k}^j,u_{\tau|k}^j\right) + \sum_{r=1}^{n_s} \delta_r Q_{i,k_r}^{j_r}
\label{eq:mig_cost}
\\[1mm]
\text{s.t.} \quad
&
x_{i,\tau+1|k}^j
=
f_i
\left(
x_{i,\tau|k}^j,
u_{i,\tau|k}^j
\right),
~
\tau=k,\ldots,k+N-1
\label{eq:mig_dynamics}
\\
&
x_{i,k|k}^j=x_{i,k}^j
\label{eq:mig_initial}
\\
&g_i\left(x_{i,\tau|k}^j,u_{i,\tau|k}^j\right) \leq 0, ~ \tau=k,...,k+N-1 
\label{eq:mig_private} \\
&s \left(x_{\tau|k}^j,u_{\tau|k}^j\right) \leq 0, ~ \tau=k,...,k+N-1 
\label{eq:mig_shared}\\
&x_{i,k+N|k}^j = \sum_{r=1}^{n_s} \delta_r x_{i,k_r}^{j_r}
\label{eq:mig_terminal}
\end{align}
\end{subequations}

The objective combines the predicted horizon cost with the selected continuation cost, while \eqref{eq:mig_terminal} enforces exact reconnection to the stored joint state. For fixed $\boldsymbol\delta$, the continuation cost is constant in the inner game, but it influences what terminal joint state is eventually selected. In the ideal formulation, a candidate is a pair
\begin{equation}
    \left(
    \boldsymbol{\delta},
    \mathbf z_k^{j,\star}(\boldsymbol{\delta})
    \right),
    \label{eq:candidate_pair}
\end{equation}
where $\boldsymbol\delta$ satisfies \eqref{eq:terminal_binary_sum} and $\mathbf z_k^{j,\star}(\boldsymbol\delta)$ is a selected GNE of \eqref{eq:mixed_integer_game}. Terminal selection is discrete, while the corresponding strategies are continuous. Since each terminal selection may admit multiple GNEs, the cooperative rule selects among both terminal occurrences and equilibrium candidates.

\subsection{GNE Parameterization}

For a given terminal-state selection, the corresponding dynamic game may admit multiple GNEs. We explore equilibrium candidates using the shared-multiplier parameterization of \cite{PustilnikBorrelli2025}. For a common multiplier $\sigma\in\mathbb R_+^{m_s}$, define 
\begin{equation}
    A_i=\operatorname{diag}(a_{i,1},\ldots,a_{i,m_s}),
    \qquad a_{i,r}>0,
    \label{eq:scaling_matrix}
\end{equation}
and player-specific shared multipliers 
\begin{equation}
    \sigma_i=A_i\sigma.
    \label{eq:scaled_shared_multiplier}
\end{equation}
For each fixed terminal selection, the parameterized Lagrangian is ($h_i$ also includes the terminal equality~\eqref{eq:mig_terminal}).
\begin{equation}
    \mathcal L_i
    =
    J_{i,k}^j(\boldsymbol\delta)
    +\mu_i^\top h_i
    +\lambda_i^\top g_i
    +(A_i\sigma)^\top s.
    \label{eq:scaled_lagrangian}
\end{equation}
For fixed $\mathcal A:=\{A_1,\ldots,A_M\}$, candidates are computed from the KKT conditions:
\begin{subequations}\label{eq:scaled_kkt}
\begin{align}
    & \nabla_{\mathbf z_i}\mathcal L_i=0,
    && \forall i\in\mathcal M\\
    & h_i(\mathbf z_i)=0,
    && \forall i\in\mathcal M\\
    & 0\le\lambda_i\perp-g_i(\mathbf z_i)\ge0,
    && \forall i\in\mathcal M\\
    & 0\le\sigma\perp-s(\mathbf z_1,\ldots,\mathbf z_M)\ge0
\end{align}
\end{subequations}
Positive diagonal scaling preserves shared-constraint complementarity while allowing different player multipliers. Varying $\mathcal A$ explores candidate equilibria, although a solution need not exist for every allocation. In nonconvex games, KKT solutions remain stationary candidates and do not certify GNEs. To reduce the parameter dimension, we restrict
\begin{equation}
    A_i=a_iI,\qquad \sigma_i=a_i\sigma,\qquad a_i>0.
    \label{eq:scalar_scaling}
\end{equation}
A common rescaling of all $a_i$ can be absorbed into $\sigma$, so imposing
\begin{equation}
    \sum_{i=1}^{M}a_i=1
    \label{eq:scalar_normalization}
\end{equation}
removes this redundancy and leaves $M-1$ independent allocation parameters. This scalar parameterization recovers the KKT structure of Rosen's $r$-normalized equilibria \cite{Rosen1965}, with $r_i\propto1/a_i$.

\subsection{Cooperative Solution Selection}

At each time $k$ of iteration $j$, the players must decide whether to replace the retained backup and, if so, with which candidate. Because each player keeps its own cost function, candidates cannot in general be ranked by a single objective. Instead, before the task begins, the players agree on two rules:
\begin{enumerate}
    \item an \emph{admissibility rule}, which determines the candidates acceptable to every player, and
    \item a \emph{selection rule}, which chooses one candidate among the admissible ones.
\end{enumerate}
The selected candidate becomes the new backup. If no candidate is admissible, the players continue along the current backup.

\textit{Candidate costs:} Let $\mathcal{C}^j_k$ denote the set of primal-feasible candidates computed at time $k$ of iteration $j$. Although the database $\mathcal{D}^{j-1}$ is finite, continuous multiplier allocations and equilibrium non-uniqueness may yield infinitely many candidates. Each candidate $c \in \mathcal{C}^j_k$ selects a terminal occurrence $q_r = (j_r, k_r)$ and, together with the stored trajectory from $x^{j_r}_{k_r}$ to $x_F$, defines a complete joint trajectory from the current state to the target. Player $i$'s cost along this trajectory is
\begin{equation}
    \hat{J}^j_{i,k}(c) := \sum_{\tau=k}^{k+N-1} \ell_i\big(x^{j,c}_{\tau|k}, u^{j,c}_{\tau|k}\big) + Q^{j_r}_{i,k_r}.
    \label{eq:candidate_cost}
\end{equation}
Let $J^{B,j}_{i,k}$ denote player $i$'s remaining cost along the retained backup. The rules below compare each candidate with the backup through these quantities.

\textit{Admissibility rule (individual non-degradation):} A candidate is admissible if it does not increase any player's remaining cost relative to the backup:
\begin{equation}
    \mathcal{C}^j_{k,\mathrm{adm}} := \big\{ c \in \mathcal{C}^j_k \;:\; \hat{J}^j_{i,k}(c) \le J^{B,j}_{i,k}, \ \forall i \in \mathcal{M} \big\}.
    \label{eq:admissibility}
\end{equation}
This rule underlies the non-degradation guarantee of Theorem~2.

\textit{Selection rule:} Among admissible candidates, we consider two alternatives.

\emph{(a) Weighted cost.} The players minimize a weighted sum of their costs,
\begin{equation}
    c^\star \in \arg\min_{c \in \mathcal{C}^j_{k,\mathrm{adm}}} \sum_{i=1}^{M} w_i \hat{J}^j_{i,k}(c), \qquad w_i \ge 0, \ \sum_{i=1}^M w_i = 1.
    \label{eq:weighted_selection}
\end{equation}

\emph{(b) Nash bargaining.} Following~[14], each candidate is evaluated by the improvements it offers the players relative to a disagreement point. Here, disagreement means continuing along the current backup, so
\begin{equation}
    d_i := J^{B,j}_{i,k}, \qquad \Delta_i(c) := d_i - \hat{J}^j_{i,k}(c).
\end{equation}
Admissibility guarantees $\Delta_i(c) \ge 0$ for all $i$. The standard Nash product $\prod_i \Delta_i(c)$ rewards improvements for all players without requiring them to be equal. However, if some player cannot improve under any admissible candidate, the product is identically zero and cannot distinguish candidates that benefit the other players. We therefore use the regularized criterion
\begin{equation}
    c^\star \in \arg\max_{c \in \mathcal{C}^j_{k,\mathrm{adm}}} \prod_{i=1}^{M} \big(\Delta_i(c) + \epsilon\big), \qquad \epsilon > 0.
    \label{eq:nash_selection}
\end{equation}
The regularization affects only the ranking of admissible candidates. It does not relax admissibility, so candidates that increase any player's cost remain excluded. Because the disagreement point is the current backup, it is updated whenever a candidate is accepted or the backup is advanced.

\begin{definition}[Adopted Candidate]
\label{def:adopted_candidate}
Let $c \in \mathcal{C}^j_k$ be a candidate with terminal occurrence $q_r = (j_r, k_r)$ and predicted trajectory $\{x^{j,c}_{\tau|k}, u^{j,c}_{\tau|k}\}_{\tau=k}^{k+N-1}$, $x^{j,c}_{k+N|k}$. The candidate $c$ is \emph{adopted} at time $k$ of iteration $j$ if:
\begin{enumerate}
    \item it is \emph{verified primal feasible}, i.e., its predicted trajectory satisfies the dynamics~\eqref{eq:mig_dynamics}, the initial condition~\eqref{eq:mig_initial}, the private constraints~\eqref{eq:mig_private}, the shared constraints~\eqref{eq:mig_shared}, and the terminal condition~\eqref{eq:mig_terminal};
    \item it is admissible, i.e., $c \in \mathcal{C}^j_{k,\mathrm{adm}}$; and
    \item it is chosen by the agreed selection rule.
\end{enumerate}
Upon adoption, the retained backup is replaced by the \emph{complete trajectory} of $c$, obtained by following its predicted trajectory from $x^j_k$ to $x^{j_r}_{k_r}$ and then the stored trajectory $\{x^{j_r}_t, u^{j_r}_t\}_{t=k_r}^{T^{j_r}}$ from $x^{j_r}_{k_r}$ to $x_F$. If no candidate satisfies 1)--3), no candidate is adopted and the current backup is retained.
\end{definition}

\begin{proposition}[Feasibility of Adopted Candidates]
\label{prop:adopted_feasible}
If a candidate $c$ is adopted at time $k$ of iteration $j$, then its complete trajectory is a feasible joint trajectory from $x^j_k$ to $x_F$, i.e., it satisfies the dynamics, private constraints, and shared constraints at every time step. Moreover, its cost for player $i$ equals $\hat{J}^j_{i,k}(c)$ in~\eqref{eq:candidate_cost}.
\end{proposition}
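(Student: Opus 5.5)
The proof concatenates two feasible pieces. By Definition~\ref{def:adopted_candidate}, the complete trajectory of an adopted candidate $c$ has two segments. The first is the predicted segment $\{x^{j,c}_{\tau|k},u^{j,c}_{\tau|k}\}_{\tau=k}^{k+N-1}$ ending at $x^{j,c}_{k+N|k}$. The second is the stored suffix $\{x^{j_r}_t,u^{j_r}_t\}_{t=k_r}^{T^{j_r}}$ of iteration $j_r\in\mathcal I^{j-1}$. I will check feasibility on each segment, then check the junction, then sum the costs.

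\emph{Predicted segment.} Condition~1) (verified primal feasibility) gives the following facts directly. The segment starts at $x^j_k$ by \eqref{eq:mig_initial}. It obeys $x_{i,\tau+1|k}=f_i(x_{i,\tau|k},u_{i,\tau|k})$ for every player by \eqref{eq:mig_dynamics}. It satisfies $g_i\le0$ and $s\le0$ for $\tau=k,\ldots,k+N-1$ by \eqref{eq:mig_private}--\eqref{eq:mig_shared}. None of this depends on whether $c$ is a GNE, or even a KKT point.

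\emph{Stored suffix.} Since $j_r$ is a successful iteration, every stored state-input pair from time $k_r$ to $T^{j_r}$ satisfies the dynamics, private constraints and shared constraints, and $x^{j_r}_{T^{j_r}}=x_F$. The suffix is exactly the retained executed data, not a resolved game, so this holds by construction of $\mathcal D^{j-1}$.

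\emph{Junction and time shift.} Because of the binary constraint \eqref{eq:terminal_binary_sum}, exactly one $\delta_r=1$. The terminal condition \eqref{eq:mig_terminal}, applied for every $i$, then gives $x^{j,c}_{k+N|k}=x^{j_r}_{k_r}$ as joint states. I re-index the suffix by $t\mapsto k+N+(t-k_r)$ and check three things.
\begin{itemize}
\item The dynamics transition from step $k+N-1$ lands on the first suffix state.
\item The suffix transitions are the stored ones; since $f_i$, $g_i$ and $s$ are time-invariant and the game data are unchanged across iterations, shifting time preserves both the dynamics and the constraints.
\item The terminal state is $x_F$.
\end{itemize}
So the concatenation is a feasible joint trajectory from $x^j_k$ to $x_F$. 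I expect the junction to be the main obstacle. One must make sure that no constraint is imposed on the state $x_{k+N|k}$ alone beyond reconnection. One must also make sure that the input applied at that state is the stored $u^{j_r}_{k_r}$, whose feasibility is already certified. Both hold here.

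\emph{Cost.} Player $i$'s cost along the concatenation is the sum of its stage costs over the predicted segment plus $\sum_{t=k_r}^{T^{j_r}-1}\ell_i(x^{j_r}_t,u^{j_r}_t)+V_i(x_F)$. By \eqref{eq:sampled_cost_to_go}, this tail equals $Q^{j_r}_{i,k_r}$, so the total equals $\hat J^j_{i,k}(c)$ in \eqref{eq:candidate_cost}. I will note that the cost is measured with the same terminal term $V_i(x_{T})$ as in \eqref{eq:realized_iteration_cost}. Assumption~\ref{ass1} is not needed here, since no truncation or extension beyond $x_F$ occurs.
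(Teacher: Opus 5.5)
Your proof is correct and follows essentially the same route as the paper: the predicted prefix is feasible by condition 1) of Definition~\ref{def:adopted_candidate}; the stored suffix is feasible because $q_r$ comes from a successful iteration; the two segments join at $x^{j,c}_{k+N|k}=x^{j_r}_{k_r}$ by \eqref{eq:mig_terminal}; and the cost identity follows from \eqref{eq:sampled_cost_to_go}. Your explicit check of the time re-indexing, using time-invariance of $f_i$, $g_i$, $s$, spells out a detail the paper leaves implicit but does not change the argument.
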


\begin{proof}
By condition 1) of Definition~\ref{def:adopted_candidate}, the predicted trajectory of $c$ starts at $x^j_k$, satisfies the dynamics, private constraints, and shared constraints over the horizon, and ends at $x^{j,c}_{k+N|k} = x^{j_r}_{k_r}$. Since $q_r \in \mathcal{D}^{j-1}$ belongs to a successful iteration, the stored trajectory from $x^{j_r}_{k_r}$ satisfies the same constraints and reaches $x_F$. The two segments share the state $x^{j_r}_{k_r}$, so their concatenation is a feasible joint trajectory from $x^j_k$ to $x_F$. Its cost for player $i$ is the predicted stage cost over the horizon plus the stored cost-to-go $Q^{j_r}_{i,k_r}$ in~\eqref{eq:sampled_cost_to_go}, which is $\hat{J}^j_{i,k}(c)$.
\end{proof}

\begin{assumption}[Synchronous Decisions] \label{ass2}
All players share the trajectory database $\mathcal{D}^{j-1}$ and the current backup, apply the admissibility and selection rules to the same candidate set with a common deterministic tie-breaking rule, and adopt a new candidate at the same time step. Communication is reliable and synchronous.
\end{assumption}

\subsection{Practical Solution Considerations}

The learning problem combines discrete terminal selection with continuous allocation and equilibrium selection. Let $\Phi$ denote the cooperative criterion in minimization form, such as the negative regularized Nash product. Writing $c=(\boldsymbol\delta,\mathbf z_k^{j,\star})$, the problem can be written as
\begin{equation}
\begin{aligned}
    \min_{\boldsymbol\delta,\mathbf a,\mathbf z_k^{j,\star}}
    \quad &
    \Phi\!\left(
        \hat J_{1,k}^j(c),\ldots,\hat J_{M,k}^j(c)
    \right)
    \\
    \text{s.t.}\quad &
    \sum_{r=1}^{n_s}\delta_r=1,
    \qquad \delta_r\in\{0,1\},
    \\
    &
    \sum_{i=1}^{M}a_i=1,
    \qquad a_i>0,
    \\
    &
    \mathbf z_k^{j,\star}
    \in\operatorname{GNE}(\boldsymbol\delta,\mathbf a),
    \\
    &
    \hat J_{i,k}^j(c)\le J_{i,k}^{B,j},
    \qquad \forall i\in\mathcal M.
\end{aligned}
\label{eq:bilevel_learning_problem}
\end{equation}
Here, $\operatorname{GNE}(\boldsymbol\delta,\mathbf a)$ denotes equilibria of the selected terminal-constrained game admitting shared-constraint multipliers $\sigma_i=a_i\sigma$. In practice, we sample the dual variables scaling factors using a finite grid $\mathcal A_{\mathrm{grid}} =\{\mathbf a^{(1)},\ldots,\mathbf a^{(n_a)}\}$ satisfying \eqref{eq:scalar_normalization}. For each pair $(q_r,\mathbf a^{(p)})\in \mathcal D^{j-1}\times\mathcal A_{\mathrm{grid}}$, we formulate \eqref{eq:scaled_kkt} as a mixed complementarity problem (MCP) and solve it using PATH \cite{Ferris1999}. Candidates are checked for primal feasibility and individual admissibility, and the accepted candidate minimizing $\Phi$ is selected. Its prefix and stored suffix become the new backup, whose first input is applied. If no candidate is accepted, the existing backup is retained. With one MCP attempt per pair, each update requires at most $n_s n_a$ solves. These problems can be solved in parallel. A finer grid samples the allocation space more densely but increases computation. The present implementation is a proof of concept and is not intended for real-time execution. Nonnegative stage costs also permit pruning before candidate generation. If
\begin{equation}
    Q_{i,k_r}^{j_r}>J_{i,k}^{B,j}
\end{equation}
for any player $i$, every candidate using occurrence $q_r$ violates individual non-degradation. That occurrence can therefore be skipped at the current update.
 
\section{Theoretical Guarantees} \label{sec:theoretical_guarantees}

We establish recursive feasibility and individual non-degradation by extending the LMPC continuation argument \cite{RosoliaBorrelli2018} to joint trajectories. The key object is an explicitly retained feasible backup: an adopted candidate replaces it with a predicted prefix concatenated with a stored successful suffix. Otherwise,
the current backup is followed. This construction preserves a feasible continuation independently of whether a new equilibrium candidate can be computed. Let $J_{i,k}^{\mathrm B,j}$ denote player $i$'s remaining cost under the retained backup at time $k$ of iteration $j$. At the beginning of iteration $j$, the previous successful trajectory initializes the backup, so that
\begin{equation}
    J_{i,0}^{\mathrm B,j}=\mathcal J_i^{j-1}.
    \label{eq:initial_backup_cost}
\end{equation}
For a candidate connecting to occurrence $q_r=(j_r,k_r)$, the predicted prefix and stored suffix define a complete continuation whose cost is $\hat J_{i,k}^j(c)$ in \eqref{eq:candidate_cost}.

\begin{theorem}[Recursive Feasibility]
\label{thm:recursive_feasibility}
Suppose that an initial feasible joint trajectory from $x_S$ to $x_F$ is available. Assume that the system dynamics are deterministic and exactly known, and that the dynamics and constraints are unchanged across iterations.  Then the proposed learning scheme is recursively feasible, i.e., at every time step of every iteration a feasible continuation from the current joint state to $x_F$ is available. \end{theorem}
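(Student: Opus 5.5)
The plan is a double induction: an outer induction over iterations $j$ and an inner induction over time steps $k$ within an iteration. The invariant to maintain is: \emph{at every time $k$ of iteration $j$, the retained backup is a feasible joint state-input trajectory from the current joint state $x_k^j$ to $x_F$}. Feasible means it satisfies the dynamics $f_i$, the private constraints $g_i$ and the shared constraints $s$ at every step, and reaches $x_F$ in finitely many steps. Recursive feasibility as stated in Theorem~\ref{thm:recursive_feasibility} is then exactly this invariant.

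\emph{Base case ($j=1$, $k=0$).} The backup is the given initial feasible trajectory from $x_S$ to $x_F$, so the invariant holds. This trajectory also makes $\mathcal I^1$ nonempty and hence $\mathcal D^{0}$ well defined.

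\emph{Inner step.} Suppose that at time $k$ of iteration $j$ the backup $\{\bar x_t,\bar u_t\}_{t\ge k}$ is feasible from $x_k^j=\bar x_k$ to $x_F$. By Definition~\ref{def:adopted_candidate}, there are two cases.
\begin{enumerate}
\item If a candidate $c$ is adopted, Proposition~\ref{prop:adopted_feasible} gives directly that its complete trajectory is feasible from $x_k^j$ to $x_F$, and this trajectory becomes the new backup.
\item If no candidate is adopted, the current backup is kept.
\end{enumerate}
In either case the first input of the (possibly new) backup is applied. Since the dynamics are deterministic and exactly known, $x_{k+1}^j=f(x_k^j,u_k^j)$ coincides with the backup's second state. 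The backup's tail from $k+1$ is therefore a feasible trajectory from $x_{k+1}^j$ to $x_F$, because constraint satisfaction is pointwise in time and the tail of a feasible trajectory is feasible. By Assumption~\ref{ass2}, all players hold the same backup and apply the same decision, so the joint input applied is well defined and consistent with that backup. If $x_k^j=x_F$ the iteration has terminated and nothing remains to prove.

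\emph{Outer step.} The backup has finite length $L$ (finite by construction, since it is a prefix of length $N$ concatenated with a stored suffix of finite length $T^{j_r}-k_r$). Each step either follows the current backup, which strictly reduces its remaining length by one, or replaces it by another finite feasible trajectory. To conclude that iteration $j$ is successful and can be added to $\mathcal I^{j}$, one needs that $x_F$ is reached in finite time. This is the main obstacle: repeated adoption could in principle keep resetting the remaining length, so termination does not follow from the invariant alone.

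To close this gap I would argue as in LMPC \cite{RosoliaBorrelli2018}. Every adopted candidate reconnects after exactly $N$ steps to a stored state of a successful iteration, so the remaining horizon of any backup is at most $N+\max_{l}T^l$. If needed I would state the theorem as covering feasibility at every time step, which the invariant already gives, and add that completion in finite time follows under the non-degradation rule, because the costs are then bounded along the iteration. Strict termination could be secured either by this cost argument or by noting that Theorem~\ref{thm:recursive_feasibility} only claims availability of a feasible continuation.

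Once iteration $j$ completes, it is a successful iteration. Moreover, its initial state $x_S$ and the unchanged dynamics and constraints mean that iteration $j$'s stored trajectory provides the base case for iteration $j+1$, with $J^{B,j+1}_{i,0}=\mathcal J_i^{j}$ as in \eqref{eq:initial_backup_cost}. This completes the induction over iterations.
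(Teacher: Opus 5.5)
Your argument is the paper's proof: the same invariant (a feasible retained backup from $x_k^j$ to $x_F$), the same two cases with Proposition~\ref{prop:adopted_feasible} handling adoption, and the same one-step shift using exact deterministic dynamics, closed by induction over time steps and iterations. The termination issue you flag as the main obstacle is not needed here, because each iteration's base case only requires some stored successful trajectory from $x_S$ (at minimum the initial one, still feasible since dynamics and constraints are unchanged); your proposed fix also would not work as stated, since bounded accumulated cost forces finite-time completion only under a uniform positivity condition such as $\sum_i\ell_i(x,u)\ge\rho(x)\ge c>0$ for $x\neq x_F$, which the paper imposes separately in the Remark after Theorem~\ref{thm:nondegradation}.
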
 

\begin{proof}
The proof follows the standard shift-and-concatenation argument underlying LMPC recursive feasibility \cite{RosoliaBorrelli2018}, extended here to joint trajectories of the dynamic game. At time $k=0$ of the first learning iteration, the initially available successful trajectory provides a feasible backup from $x_S$ to $x_F$. Suppose a feasible backup from $x_k^j$ to $x_F$ is available at time $k$ of iteration $j$. If no new candidate is adopted, the  backup is retained. If a new candidate $c$ is adopted, the backup is replaced by the complete trajectory of $c$, which is a feasible trajectory from $x^j_k$ to $x_F$ by Proposition~\ref{prop:adopted_feasible}. In both cases, the first input of the resulting backup is applied. Since the dynamics are deterministic and exactly known, the realized state $x^j_{k+1}$ coincides with the next state of the backup, so its remaining portion is a feasible continuation from $x^j_{k+1}$ to $x_F$. The claim follows by induction over time steps and iterations.
\end{proof}

\begin{theorem}[Individual Non-Degradation]
\label{thm:nondegradation}
Suppose the conditions of Theorem~\ref{thm:recursive_feasibility} hold. Furthermore, suppose that a new candidate $c$ is adopted only if
\begin{equation}
    \hat J_{i,k}^j(c)
    \leq
    J_{i,k}^{\mathrm B,j},
    \qquad
    \forall i\in\mathcal M,
    \label{eq:theorem_acceptance}
\end{equation}
where $J_{i,k}^{\mathrm B,j}$ is the remaining cost of the currently retained feasible backup. Then, for every iteration $j$ completed in finite time, the realized cost of every player satisfies,
\begin{equation}
    \mathcal J_i^j
    \leq
    \mathcal J_i^{j-1},
    \qquad
    \forall i\in\mathcal M.
\end{equation}
\end{theorem}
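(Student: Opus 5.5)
The plan is to track the quantity $\Psi_{i,k}^j := \sum_{\tau=0}^{k-1}\ell_i(x_\tau^j,u_\tau^j) + J_{i,k}^{\mathrm B,j}$. This is the cost player $i$ has already incurred plus the remaining cost of the currently retained backup. We will show it is non-increasing in $k$ for each player. The endpoints then give the claim. At $k=0$ we have $\Psi_{i,0}^j = J_{i,0}^{\mathrm B,j} = \mathcal J_i^{j-1}$ by \eqref{eq:initial_backup_cost}. At $k=T^j$ the state is $x_F$, the backup remaining cost reduces to the terminal term $V_i(x_F)$ (or the zero-cost stationary tail guaranteed by Assumption~\ref{ass1}), and $\Psi_{i,T^j}^j = \mathcal J_i^j$ by \eqref{eq:realized_iteration_cost}.

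The inductive step from $k$ to $k+1$ has two stages at time $k$.

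\emph{Decision stage.} Let $J_{i,k}^{\mathrm B,j,+}$ denote the remaining cost of the backup after the decision.
\begin{itemize}
\item If no candidate is adopted, the backup is unchanged and $J_{i,k}^{\mathrm B,j,+} = J_{i,k}^{\mathrm B,j}$.
\item If $c$ is adopted, Proposition~\ref{prop:adopted_feasible} gives that the new backup is feasible with remaining cost $\hat J_{i,k}^j(c)$. The acceptance condition \eqref{eq:theorem_acceptance} then gives $J_{i,k}^{\mathrm B,j,+} \le J_{i,k}^{\mathrm B,j}$ for every $i$.
\end{itemize}

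\emph{Execution stage.} The first input $u_k^j$ of the updated backup is applied. Since the dynamics are deterministic and exact (the setting of Theorem~\ref{thm:recursive_feasibility}), $x_{k+1}^j$ coincides with the backup's next state. The backup's remaining cost therefore splits as $J_{i,k}^{\mathrm B,j,+} = \ell_i(x_k^j,u_k^j) + J_{i,k+1}^{\mathrm B,j}$, where the shifted backup is exactly the retained backup at time $k+1$. Combining the two stages,
\[
\Psi_{i,k+1}^j = \sum_{\tau=0}^{k}\ell_i(x_\tau^j,u_\tau^j) + J_{i,k+1}^{\mathrm B,j} = \sum_{\tau=0}^{k-1}\ell_i(x_\tau^j,u_\tau^j) + J_{i,k}^{\mathrm B,j,+} \le \Psi_{i,k}^j .
\]

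The step needing most care is this additive decomposition of the backup cost. It must hold for backups formed by concatenating a predicted prefix with a stored suffix, and for backups that are shifts of earlier ones. This is where the cost definitions \eqref{eq:candidate_cost} and \eqref{eq:sampled_cost_to_go} must be used. Both are sums of stage costs along the concrete stored state-input sequence plus $V_i$ at its endpoint. For a stored suffix, dropping its first stage term yields $Q_{i,t+1}^{l}$, so shifting just removes the first stage cost.

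A second point to check is the termination bookkeeping. The backup reaches $x_F$ exactly when the closed loop does, because the closed loop follows the backup. Hence at $T^j$ the remaining backup cost is only $V_i(x_{T^j}^j)$, which matches \eqref{eq:realized_iteration_cost}. Telescoping $\Psi_{i,T^j}^j \le \Psi_{i,0}^j$ then yields $\mathcal J_i^j \le \mathcal J_i^{j-1}$ for all $i\in\mathcal M$.
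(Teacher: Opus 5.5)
Your potential $\Psi_{i,k}^j$ is exactly the paper's $S_{i,k}^j+J_{i,k}^{\mathrm B,j}$. Your decision/execution argument that it is non-increasing matches the paper's two cases, so the body of the proof is fine. The problem is the endpoint.

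You claim $\Psi_{i,T^j}^j=\mathcal J_i^j$ because the backup reaches $x_F$ when the closed loop does. That only shows the backup's \emph{first} visit to $x_F$ is at time $T^j$, not that the backup \emph{ends} there. Stored suffixes contain $x_F$ only at their final sample, since each $T^l$ is a first hitting time. A predicted prefix, however, may pass through $x_F$ before its $N$-th step. Suppose the last adoption occurred at some $k'$ with $T^j<k'+N$. Then the retained backup at $T^j$ still contains the rest of that prefix, from $x_F$ to the selected occurrence $x_{k_r}^{j_r}$ (which need not be $x_F$), followed by the stored suffix. Assumption~1 only says that a zero-cost stationary input $u_F$ exists; it does not force the backup to use it. So in general $J_{i,T^j}^{\mathrm B,j}\neq V_i(x_F)$, and your equality is false.

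The repair is what the paper does. The leftover tail ends with the terminal term $V_i(x_F)$ and otherwise consists of stage costs, which are nonnegative by Assumption~1. Hence $J_{i,T^j}^{\mathrm B,j}\ge V_i(x_F)$, and so $\mathcal J_i^j=S_{i,T^j}^j+V_i(x_F)\le\Psi_{i,T^j}^j\le\Psi_{i,0}^j=\mathcal J_i^{j-1}$.

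This is not cosmetic: your argument never uses $\ell_i\ge0$, yet the theorem depends on it. Suppose the tail after the first visit to $x_F$ could have negative cost. Then a candidate whose cost up to $x_F$ exceeds the backup's could pass the acceptance condition solely because of that tail. The iteration would stop at $x_F$ without incurring the tail, and $\mathcal J_i^j>\mathcal J_i^{j-1}$ would be possible.
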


\begin{proof}
The argument extends the LMPC iteration-cost monotonicity construction of \cite{RosoliaBorrelli2018} to player-specific costs in the cooperative dynamic-game setting. For player $i$, define the cost accumulated during iteration $j$ up to, but not including, time $k$ as
\begin{equation}
    S_{i,k}^j
    :=
    \sum_{\tau=0}^{k-1}
    \ell_i(x_\tau^j,u_\tau^j).
\end{equation}
Consider the sum of the cost already incurred and the remaining cost of the currently retained backup,
\begin{equation}
    S_{i,k}^j + J_{i,k}^{\mathrm B,j}.
    \label{eq:accumulated_plus_backup}
\end{equation}

Suppose first that no new candidate is adopted at time $k$. The first input of the current backup is applied and the backup is shifted by one time step. Therefore,
\begin{align}
    J_{i,k+1}^{\mathrm B,j}
    &=J_{i,k}^{\mathrm B,j}-\ell_i(x_k^j,u_k^j),\\
    S_{i,k+1}^j
    &=S_{i,k}^j+\ell_i(x_k^j,u_k^j).
\end{align}
Consequently,
\begin{equation}
    S_{i,k+1}^j + J_{i,k+1}^{\mathrm B,j}
    =
    S_{i,k}^j + J_{i,k}^{\mathrm B,j}.
\end{equation}

Suppose instead that a candidate $c$ is adopted.
Its predicted prefix and stored suffix become the new
backup. By \eqref{eq:theorem_acceptance}, applying its
first input gives
\begin{equation}
    J_{i,k+1}^{\mathrm B,j}
    =\hat J_{i,k}^j(c)-\ell_i(x_k^j,u_k^j)
    \le J_{i,k}^{\mathrm B,j}-\ell_i(x_k^j,u_k^j).
\end{equation}
Hence,
\begin{equation}
    S_{i,k+1}^j + J_{i,k+1}^{\mathrm B,j}
    \leq
    S_{i,k}^j + J_{i,k}^{\mathrm B,j}.
\end{equation}

Thus, in either case, $S_{i,k}^j+J_{i,k}^{\mathrm B,j}$ is non-increasing along the iteration. At the beginning of iteration $j$, using
\eqref{eq:initial_backup_cost},
\begin{equation}
    S_{i,0}^j+J_{i,0}^{\mathrm B,j}
    =
    \mathcal J_i^{j-1}.
\end{equation}
If iteration $j$ completes at time $T^j$, any remaining
backup suffix ends at $x_F$ and has nonnegative stage
costs by Assumption~1. Its remaining cost therefore
satisfies $J_{i,T^j}^{\mathrm B,j}\ge V_i(x_F)$.
Combining this with the preceding inequalities gives
\begin{equation}
    \mathcal J_i^j
    =S_{i,T^j}^j+V_i(x_F)
    \le S_{i,T^j}^j+J_{i,T^j}^{\mathrm B,j}
    \le \mathcal J_i^{j-1},
\end{equation}
for every player $i\in\mathcal M$.
\end{proof}

\begin{remark}
The guarantees of Theorems~\ref{thm:recursive_feasibility} and \ref{thm:nondegradation} do not require exhaustive exploration of the GNE set or a globally optimal solution of the outer selection problem. The discretization of the scaling parameters affects which improving candidates can be discovered, but not recursive feasibility or individual non-degradation. The individual non-degradation argument allows non-separable objectives and follows from additive costs and player-wise acceptance \eqref{eq:theorem_acceptance}, with Assumption \ref{ass1} ensuring that discarding any remaining suffix at completion cannot increase the cost. Under Assumption \ref{ass1}, finite-time completion additionally follows if $\sum_i\ell_i(x,u)\ge\rho(x)$ for every feasible $(x,u)$, where $\rho(x_F)=0$ and $\rho(x)\ge c>0$ whenever $x\neq x_F$. Indeed, every backup has remaining cost at least $V_i(x_F)$, so the proof of Theorem~\ref{thm:nondegradation} gives
\[
    kc \le \sum_i S_{i,k}^j
    \le \sum_i\bigl(\mathcal J_i^{j-1}-V_i(x_F)\bigr)
\]
for every $k$ before completion. Since the right-hand side is finite, infinitely many non-terminal steps are impossible. Nonnegative stage costs also justify the terminal-state pruning rule. For nonconvex games, the recursive-feasibility and non-degradation guarantees apply to any computed KKT/MCP candidate that is verified to satisfy the primal constraints and the acceptance condition; certification of global best-response optimality is not required for these two guarantees. If no acceptable candidate is found, the existing feasible backup is retained. 
\end{remark}

%===============================================================================
\section{Numerical Example and Results}

\subsection{Three-Player Cooperative Motion Planning}
This example is based on the example from \cite{RosoliaBorrelli2018}. We consider three agents repeatedly moving between initial and final states in a shared planar environment containing an elliptical obstacle. Each agent has its own target and cost function, while collision avoidance and shared bounds on speed and longitudinal actuation couple the players' decisions. The example uses a nonlinear unicycle model with heading as a state and heading rate as a control input. For player $i\in\{1,2,3\}$, the state and input are
\begin{equation}
    x_i = [p_{x,i},\;p_{y,i},\;v_i,\;\psi_i]^\top,
    \qquad
    u_i = [a_i,\;\omega_i]^\top,
\end{equation}
where $v_i$ is the forward speed, $\psi_i$ is the heading, $a_i$ is the commanded longitudinal acceleration, and $\omega_i$ is the heading rate. The continuous-time dynamics include quadratic drag:
\begin{equation}
\begin{aligned}
    \dot p_{x,i} &= v_i\cos\psi_i, &
    \dot p_{y,i} &= v_i\sin\psi_i,\\
    \dot v_i &= a_i-c_Dv_i^2, &
    \dot\psi_i &= \omega_i,
\end{aligned}
\label{eq:numerical_unicycle_dynamics}
\end{equation}
with $c_D=0.25\,\mathrm{m}^{-1}$. The dynamics are discretized by fourth-order Runge-Kutta integration with sampling time $\Delta t=0.25\,\mathrm{s}$ and piecewise-constant inputs.

The private constraints are
\begin{equation}
\begin{aligned}
    v_i\geq0,\qquad
    |\omega_i|\leq\frac{\pi}{2}\,\mathrm{rad/s},\\
    \frac{(p_{x,i}-1)^2}{6^2}
    +\frac{p_{y,i}^2}{2^2}\geq1,
\end{aligned}
\label{eq:numerical_private_constraints}
\end{equation}
where the position coordinates in the obstacle constraint are expressed in meters. Thus, the obstacle is centered at $(1,0)\,\mathrm{m}$ and has semi-axes $6\,\mathrm{m}$ and $2\,\mathrm{m}$. The players are coupled through the shared constraints
\begin{equation}
\begin{aligned}
    \sum_{i=1}^{3}v_i^2 \leq30[m/s]^2, \quad   \sum_{i=1}^{3}a_i^2 \leq 5[m/s^2]^2,\\
    \|p_i-p_j\|_2^2 \geq d_{\mathrm{sep}}^2,\quad i\neq j, \quad d_{\mathrm{sep}}=2[m]
\end{aligned}
\label{eq:numerical_collision_constraint}
\end{equation}
where $p_i=[p_{x,i},p_{y,i}]^\top$. The speed bound is a joint quadratic budget. These budgets create interactions even when collision avoidance is inactive. The obstacle and pairwise separation constraints make the feasible set nonconvex, in addition to the nonlinear dynamics. Constraints are imposed at the discrete prediction nodes. The initial and target positions and headings are listed in Table~\ref{tab:numerical_configurations}. Headings are shown in degrees for readability and expressed in radians in the dynamics.
\begin{table}[t]
    \centering
    \caption{Initial and target positions and headings.}
    \label{tab:numerical_configurations}
    \small
    \setlength{\tabcolsep}{5pt}
    \begin{tabular}{ccccc}
        \toprule
        Player & State & Position [m] & Velocity[m/s] & Heading \\
        \midrule
        1 & Initial & $(0,-9)$   & 0.25   & $135^\circ$ \\
          & Target  & $(0,9)$    & 0  & $90^\circ$  \\
        \midrule
        2 & Initial & $(4.5,4.5)$   & 0.25 & $180^\circ$ \\
          & Target  & $(-9,-9)$    & 0  & $-135^\circ$ \\
        \midrule
        3 & Initial & $(-4.5,4.5)$   & 0.25 & $45^\circ$  \\
          & Target  & $(9,-9)$     & 0  & $-90^\circ$ \\
        \bottomrule
    \end{tabular}
\end{table}

Each player uses the stage cost
\begin{equation}
    \ell_i(x_i,u_i)
    =(x_i-x_{i,F})^\top Q(x_i-x_{i,F})
    +u_i^\top R u_i+\rho_i(x_i),
    \label{eq:numerical_stage_cost}
\end{equation}
where
\[
    Q=\operatorname{diag}(0.1,0.1,0,0),\qquad
    R=\operatorname{diag}(0.1,0.1),
\]
and
\begin{equation}
    \rho_i(x_i)=
    \begin{cases}
        \Delta t, & x_i\neq x_{i,F},\\
        0, & \text{otherwise},
    \end{cases}
    \label{eq:numerical_time_cost}
\end{equation}
The cost penalizes position error, control effort and elapsed time outside the target joint state. Speed and heading are not directly penalized. $\rho_i(x_i)$ is part of the cost function which is solved by the dynamic game numerical solver even though it is discontinuous.

\subsection{Numerical Solution}

The prediction horizon is $N=5$, corresponding to $1.25\,\mathrm{s}$.
Learning is initialized with a feasible trajectory that completes the task. Each sampled terminal occurrence is stored together with its joint continuation, which provides the backup plan. Let $\boldsymbol\alpha=(\alpha_1,\alpha_2,\alpha_3)$ denote the multiplier allocation, with $A_i=\alpha_i I$ and $\alpha_1+\alpha_2+\alpha_3=1$. The multiple-equilibrium configurations use $16$ allocations:
\begin{equation}
\begin{aligned}
\mathcal A_{\mathrm{grid}}
={}\operatorname{Perm}\big\{&(\tfrac13,\tfrac13,\tfrac13),(0.50,0.25,0.25),\\&(0.80,0.10,0.10), 
(0.35,0.35,0.30),\\&(0.40,0.40,0.20), (0.40,0.30,0.30)
\big\}
\end{aligned}
\label{eq:numerical_alpha_grid}
\end{equation}
where $\operatorname{Perm}$ includes all distinct permutations of each
listed triple. Allocations remain fixed over the prediction horizon
and need not yield distinct equilibria. The players' KKT conditions are constructed using CasADi and assembled into an MCP, which is solved with PATH \cite{Ferris1999} with convergence tolerance $10^{-4}$. Because the game is nonconvex, MCP convergence identifies a first-order equilibrium candidate and does not certify global best-response optimality. The implementation is available in \cite{PustilnikLDGCode}. We compare three configurations:
\begin{enumerate}
    \item \textbf{Multiple GNE candidates with Nash bargaining (M-NB):}
    explore $\mathcal A_{\mathrm{grid}}$ and select an admissible candidate
    by maximizing the regularized Nash bargaining product.
    \item \textbf{Normalized GNE candidates with Nash bargaining (N-NB):}
    use only $\boldsymbol\alpha=(1/3,1/3,1/3)$, corresponding to equal
    shared-constraint multipliers, with the same bargaining rule.
    \item \textbf{Multiple GNE candidates with equally weighted cost
    (M-WS):} explore $\mathcal A_{\mathrm{grid}}$ and minimize
    $\frac13\sum_{i=1}^{3}\hat J_{i,k}^j(c)$ over admissible candidates $c$.
\end{enumerate}

The disagreement costs are initialized from the preceding iteration and updated to reflect each newly accepted plan and its remaining continuation. Among admissible
candidates, the selected plan maximizes the Nash bargaining product
\begin{equation}
    \prod_{i=1}^{3}
    \left(\max\{d_{i,k}^j-\hat J_{i,k}^j(c),0\}+\epsilon\right),
    \qquad \epsilon=10^{-6}.
    \label{eq:numerical_nash_product}
\end{equation}
The individual acceptance condition is checked before evaluating the product: clipping negative improvements does not make a degrading candidate admissible.

\subsection{Numerical Results}
\label{sec:numerical_results}

Figure~\ref{fig:learning_trajectories} illustrates the evolution of the
joint trajectory under M-NB. Successive iterations modify the players'
routes around the elliptical obstacle, with the final displayed iteration yielding costs of $450.78$, $449.78$, and $648.36$ for Players~1-3, respectively. Figure~\ref{fig:selection_costs} compares each player's iteration cost across the three configurations. All exhibit a pronounced initial decrease, followed by smaller improvements. Their relative performance changes over the learning iterations, as quantified in Table~\ref{tab:selection_costs}.

\begin{figure*}[t]
    \centering
    \includegraphics[width=\textwidth]{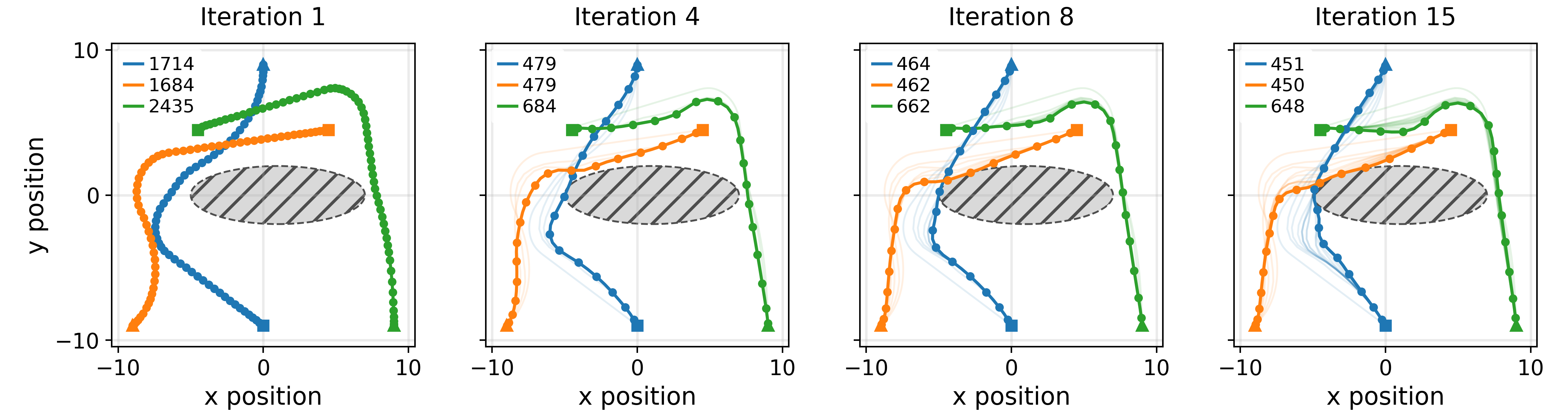}
    \caption{Trajectory evolution under M-NB at iterations $1$, $4$, $8$, and $15$, with earlier trajectories shown faintly. Blue, orange, and green identify Players~1-3. Legend values give their iteration costs.}
    \label{fig:learning_trajectories}
\end{figure*}

\begin{figure}[t]
    \centering
    \includegraphics[width=\columnwidth]{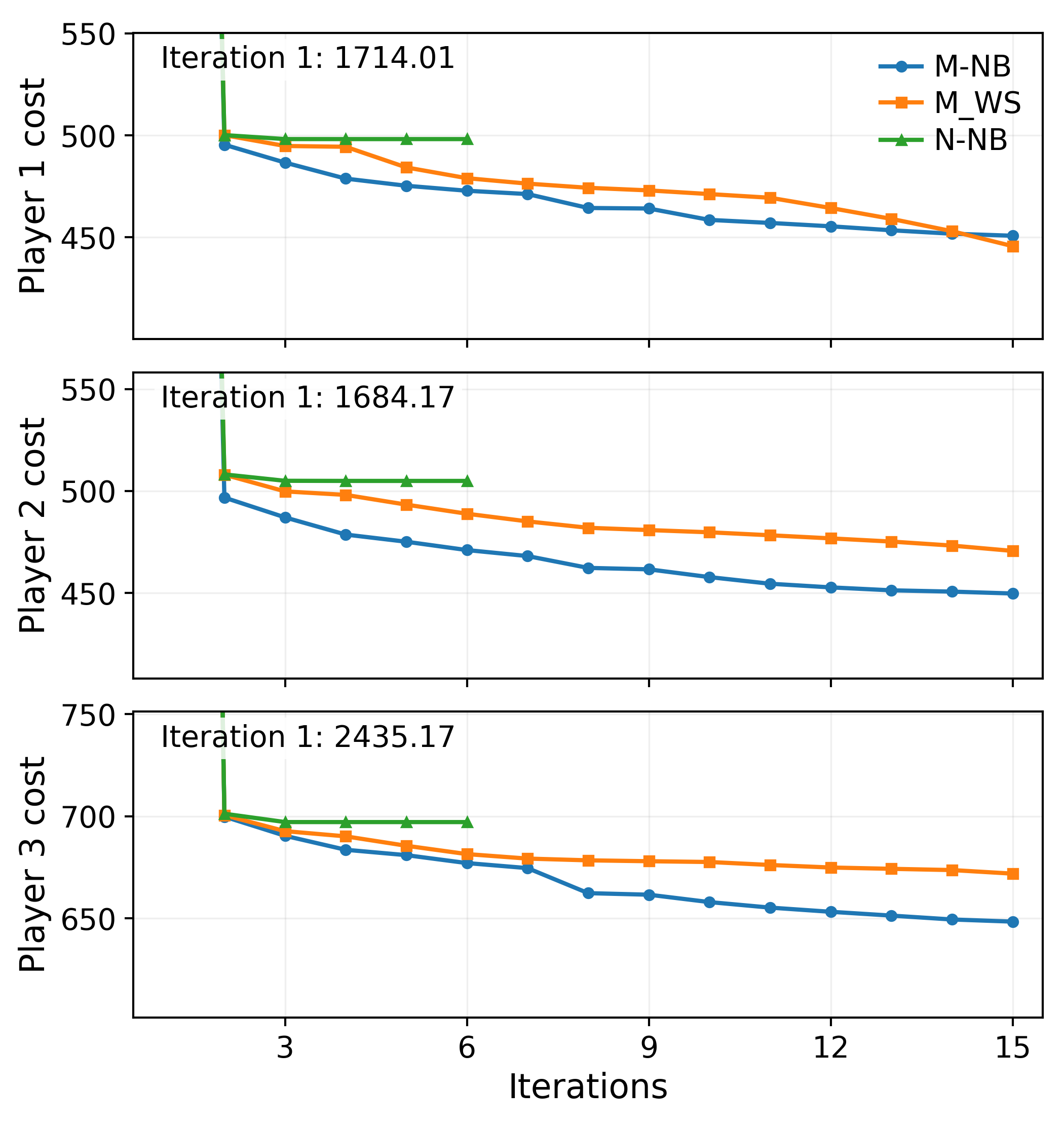}
    \caption{Player costs across iterations for M-NB, M-WS, and N-NB. Initial costs are annotated because they lie outside the displayed ranges. N-NB terminates at iteration $6$, with costs unchanged from iteration $5$ to numerical tolerance.}
    \label{fig:selection_costs}
\end{figure}

\begin{table}[t]
    \centering
    \caption{Player and aggregate costs at initialization, iteration $5$, and the final iteration of each method. Bold entries indicate the lowest cost within each comparison group.}
    \label{tab:selection_costs}
    \setlength{\tabcolsep}{3pt}
    \footnotesize
    \begin{tabular}{clrrrr}
        \hline
        $j$ & Method & $\mathcal J_1^j$ & $\mathcal J_2^j$
            & $\mathcal J_3^j$ & $\sum_i\mathcal J_i^j$ \\
        \hline
        1 & All & 1714.01 & 1684.17 & 2435.17 & 5833.35 \\
        \hline
        5 & M-NB & \textbf{475.29} & \textbf{475.15}
            & \textbf{680.94} & \textbf{1631.37} \\
        5 & N-NB & 498.16 & 504.96 & 697.14 & 1700.26 \\
        5 & M-WS & 484.27 & 493.34 & 685.54 & 1663.14 \\
        \hline
        15 & M-NB & 450.78 & \textbf{449.78}
            & \textbf{648.36} & \textbf{1548.92} \\
        15 & N-NB & 498.16 & 504.96 & 697.14 & 1700.26 \\
        15 & M-WS & \textbf{445.69} & 470.66
            & 671.87 & 1588.22 \\
        \hline
    \end{tabular}
\end{table}

All methods start from the same feasible trajectory, with aggregate cost 5833.35. At iteration 5, M-NB achieves the lowest aggregate cost, 1631.37, improving on M-WS and N-NB by 1.91\% and 4.05\%, respectively. Its costs are also lower for every individual player at this checkpoint. At iteration 15, M-NB retains the lowest aggregate cost, 1548.92, which is 2.47\% below M-WS. Neither method dominates the other across all players: M-NB incurs a 1.14\% higher cost for Player~1 relative to M-WS, while reducing the costs of Players~2 and~3 by 4.44\% and 3.50\%, respectively. Whereas M-NB reduces the costs of Players~2 and~3 by 4.44\% and 3.50\%, respectively, relative to M-WS. The final comparison therefore exhibits a trade-off between individual outcomes. N-NB terminates at iteration 6 because its costs remain unchanged from iteration 5 to numerical tolerance, with a final aggregate cost of 1700.26. M-NB continues to improve through iteration 15, reaching an aggregate cost 8.90\% below N-NB's final value. These results illustrate how allocation exploration and cooperative selection affect performance in this example, without establishing general superiority of either selection rule. Because accepted trajectories populate the terminal memory, selection decisions also affect the candidates available in subsequent iterations. The reported results do not establish asymptotic convergence or global optimality.
%===============================================================================
\section{Conclusions}
This paper presented a learning framework for cooperative iterative dynamic games with shared constraints. Stored joint trajectories provide a sampled safe set and
player-specific costs-to-go for cooperative selection among terminal joint states and equilibrium candidates. Retaining a feasible continuation ensures recursive feasibility, while player-wise acceptance guarantees non-degradation across completed iterations under the stated assumptions. The framework accommodates nonlinear and nonconvex games through verified feasible candidates. A three-player motion-planning example illustrates the effects of multiplier exploration and cooperative selection. Future work will address computational efficiency and robustness to model uncertainty.

%===============================================================================
\bibliographystyle{IEEEtran}
\bibliography{main}

\end{document}